\definecolor{mygray}{rgb}{0.5,0.5,0.5}
\definecolor{mygreen}{rgb}{0,0.6,0}
\definecolor{myorange}{rgb}{1,0.5,0}
\definecolor{mymauve}{rgb}{0.58,0,0.82}
\newtheorem{lemma}{Lemma}
\lstdefinestyle{mystyle}{
    backgroundcolor=\color{white},  
    commentstyle=\color{mygreen},
    keywordstyle=\color{blue},
    numberstyle=\tiny\color{mygray},
    stringstyle=\color{mymauve},
    basicstyle=\ttfamily\footnotesize,
    breakatwhitespace=false,         
    breaklines=true,                 
    captionpos=b,                    
    keepspaces=true,                 
    numbers=left,                    
    numbersep=5pt,                  
    showspaces=false,                
    showstringspaces=false,
    showtabs=false,                  
    tabsize=2
}
\lstdefinelanguage{Fortran}
  {keywords={program, subroutine, integer, implicit, none, real, if, else, end, do, then, mod, print, return},
   sensitive=true,
   comment=[l]{!},
   morecomment=[l]{!},
   morestring=[b]',
   alsoletter={.}
  }
\title{Enhancing Cross-Language Code Translation via Task-Specific Embedding Alignment in Retrieval-Augmented Generation}
\author{
  \textbf{Manish Bhattarai\textsuperscript{1}},
  \textbf{Minh Vu\textsuperscript{1}},
  \textbf{ Javier E. Santos \textsuperscript{2}},
\\
  \textbf{Ismael Boureima\textsuperscript{1}},
  \textbf{Daniel O' Malley \textsuperscript{2}},
\\
\\
  \textsuperscript{1}Theoretical Division, Los Alamos National Laboratory, Los Alamos, NM 87544, \\
  \textsuperscript{2} Earth \& Environmental Science Division,  Los Alamos National Laboratory, Los Alamos, NM 87544
\\
  \small{
    \textbf{Correspondence:} \href{mailto:ceodspspectrum@lanl.gov}{ceodspspectrum@lanl.gov}
  }
}
\begin{document}
\maketitle
\begin{abstract}
We introduce a novel method to enhance cross-language code translation from Fortran to C++ by integrating task-specific embedding alignment into a Retrieval-Augmented Generation (RAG) framework. Unlike conventional retrieval approaches that utilize generic embeddings agnostic to the downstream task, our strategy aligns the retrieval model directly with the objective of maximizing translation quality, as quantified by the CodeBLEU metric. This alignment ensures that the embeddings are semantically and syntactically meaningful for the specific code translation task. Our methodology involves constructing a dataset of 25,000 Fortran code snippets sourced from Stack-V2 dataset and generating their corresponding C++ translations using the LLaMA 3.1-8B language model. We compute pairwise CodeBLEU scores between the generated translations and ground truth examples to capture fine-grained similarities. These scores serve as supervision signals in a contrastive learning framework, where we optimize the embedding model to retrieve Fortran-C++ pairs that are most beneficial for improving the language model's translation performance. By integrating these CodeBLEU-optimized embeddings into the RAG framework, our approach significantly enhances both retrieval accuracy and code generation quality over methods employing generic embeddings. On the HPC Fortran2C++ dataset, our method elevates the average CodeBLEU score from 0.64 to 0.73, achieving a 14\% relative improvement. On the Numerical Recipes dataset, we observe an increase from 0.52 to 0.60, marking a 15\% relative improvement. Importantly, these gains are realized without any fine-tuning of the language model, underscoring the efficiency and practicality of our approach.
\end{abstract}

\section{Introduction}

Cross-language code translation is a critical task in modern software development, especially as legacy programming languages, such as Fortran, continue to be prevalent in scientific computing, while more contemporary languages like C++ are favored for their performance and versatility in production environments. The goal of automatic translation from Fortran to C++ is to preserve the functionality and structure of legacy code while benefiting from the optimizations and ecosystem of C++. However, achieving high-quality translations that adhere to the syntax and semantic norms of the target language remains a challenging problem, particularly when there is a lack of large, aligned datasets or evaluation metrics that cover both source and target languages effectively.

Traditional approaches to cross-language translation, such as Retrieval-Augmented Generation (RAG)~\citep{rag} typically involve two phases: first, retrieving relevant examples from a database, followed by a language model generating code conditioned on both the query and the retrieved examples. In prior efforts, the retrieval models in RAG systems have relied on general-purpose embedding models~\citep{bhattarai2024enhancing,liretrieval}, which are not tailored to the specific nuances of code translation. These embeddings aim to retrieve relevant pairs from the source and target languages but do not directly optimize for the quality of the generated code. As a result, while the retrieved examples may be relevant in a broad sense, they often fail to guide the language model towards producing translations that maximize fidelity to the ground truth in the target language.
This gap is particularly problematic in scenarios where explicit metrics, such as CodeBLEU~\cite{ren2020codebleu}-designed to assess both syntactic and semantic correctness of translated code—are only available for the target language (e.g., C++ in this case). Without aligning the retrieval mechanism to such a task-specific metric, the system may retrieve suboptimal examples, leading to poor code generation performance. The inability to leverage task-relevant quality metrics during retrieval weakens the overall system, limiting its effectiveness in high-accuracy code translation tasks.
To address these limitations, we propose a novel contrastive learning framework that aligns the retrieval phase of the RAG system with the goal of maximizing the CodeBLEU~\citep{codebert} score for the generated C++ code. We collect a dataset of 25,000 Fortran code examples from Stack V2~\citep{lozhkov2024starcoder} and use the LLaMA 3.1-8B~\citep{touvron2023llama} model to generate corresponding C++ translations. In the absence of ground truth C++ translations, we evaluate the quality of these translations using pairwise CodeBLEU similarity scores. This metric captures both syntactic correctness and semantic fidelity, providing a robust signal for aligning the retrieval model through contrastive learning.

The proposed approach aims to addresses the shortcomings of general-purpose embedding models by integrating task-specific metrics into the retrieval optimization process. By aligning the retrieval model with the downstream task of producing high-quality C++ code, our method ensures that the examples retrieved during inference are not just broadly similar but are semantically and syntactically aligned in a way that enhances the LLM's generative performance. The result is a significant improvement in translation quality, as measured by CodeBLEU, over previous methods that lack such alignment.

Our contribution is twofold: first, we demonstrate the effectiveness of contrastive learning for fine-tuning retrieval models in the context of cross-language code translation, using a task-specific metric to guide alignment. Second, we show that optimizing retrieval for downstream generation tasks can lead to state-of-the-art results, particularly in cases where aligned datasets are not readily available for both source and target languages. This work not only advances the field of code translation but also opens up new possibilities for applying similar techniques to other language pairs and domains where task-specific evaluation metrics are available for only one side of the translation.
\section{Related Work}

Historically, code translation strategies before the advent of LLMs relied heavily on rule-based and statistical machine translation (SMT) systems~\citep{smt}. These systems used predefined rules or statistical mappings between the source and target programming languages, such as tree-based translation approaches that mapped syntax trees between languages. While these methods provided structured and interpretable outputs, they were limited in their ability to handle the semantic complexities of different programming languages and struggled with code diversity, edge cases, and idiomatic translations.

With the rise of deep learning and LLMs, fine-tuning models on large datasets became the go-to method for improving code translation. Models like CodeBERT~\citep{codebert} and Codex~\citep{ccodex}, when fine-tuned on specific language pairs, improved translation quality by leveraging vast amounts of parallel code data. However, the main limitation of LLM fine-tuning lies in the resource-intensive process. Fine-tuning requires substantial amounts of labeled data and computational resources, making it impractical for niche or legacy languages like Fortran, where parallel data may be scarce.

As a next step, task-specific alignment of LLMs emerged to improve translation by better guiding the model's output. While alignment techniques help improve output fidelity, they still necessitate fine-tuning or explicit modification of the LLM itself, which can be resource-intensive and may still fall short of generalization when translating between languages with significant structural differences~\citep{mishra2024granite}.

RAG introduced a more flexible approach by allowing LLMs to retrieve and condition their outputs on example pairs from a relevant dataset. While RAG improves translation by augmenting the model’s input, the effectiveness of this strategy depends on the quality and relevance of the retrieved examples. In an example case \citep{bhattarai2024enhancing}, the retrieval step relies on general-purpose embeddings like Nomic-Embed or CodeBERT, which, although effective at retrieving semantically similar code, are not optimized for specific downstream metrics like CodeBLEU. As a result, the LLM might not always retrieve the examples that would best assist in producing translations aligned with target-specific quality metrics.

The approach we propose offers a significant advantage by focusing on semantic alignment of the retrieval mechanism without the need to fine-tune the LLM itself. Through contrastive learning, we optimize the embedding model to retrieve Fortran-C++ pairs that are more likely to maximize the downstream metric (e.g., CodeBLEU) when used by the LLM for generation. This strategy ensures that the most relevant examples are retrieved for each translation task, improving the generation quality without requiring computationally expensive fine-tuning of the LLM. This retrieval alignment makes RAG more efficient and better suited for translating between languages where high-quality paired datasets may not be available. By concentrating on improving the quality of retrieved examples, our method achieves high-quality translation with minimal additional model training, leveraging existing LLM capabilities more effectively.

\section{Methods} \label{sect:method}
This section provides the technical description of our proposed method.
\subsection{Problem setting} \label{subsect:setting}

 We consider the standard code translation scenario leveraging a language model \( G \), in which a target translated code \( c^t \) of a query source code \( c^s \) is generated using \( G\): 
\begin{equation}
    c^t = G\left(c^s \right) \label{eq:gen_base}
\end{equation}
In practice, conditioning \( G \) on \( k \) example pairs of source and target code \( D := \left\{ \left(c^s_i, c^t_i \right) \right\}_{i=1}^{k}\), can significantly enhance translation. This few-shot learning approach can be expressed as:
\[
c^t = G\left(c^s,  D \right)
\]

In a RAG framework, this process is further refined by integrating a retrieval mechanism \( R \) that identifies the most pertinent \( k \) example pairs from a large corpus \( \mathcal{C} \) based on the query \( c^s \). By expressing this retrieval step as $ D = R(c^s, \mathcal{C})$, we can describe the conventional translation scenario leveraging \( G \) as
\begin{equation}
c^t = G\left(c^s,  R(c^s, \mathcal{C}) \right) \label{eq:rag}
\end{equation}
In practice, the input source code are embedded using a neural network $\Psi$, which are generally agnostic to the downstream task. We denote $c^s_{\Psi}$ as the embedding of the source code $c^s$ under the embedding $\Psi$. Hence, Eq.~\ref{eq:rag} can be expressed as
\begin{equation}
    c^t = G\left(c^s,  R(c_\Psi^s, \mathcal{C}_\Psi) \right)
\end{equation}
under the usage of the embedding model $\Psi$. Here, the notation $ \mathcal{C}_\Psi$ refers to the fact that the embedding is applied onto the corpus of ${c}^s$.

Some common embedding modules for code translation are Nomic-Embed~\cite{nussbaum2024nomic}, StarEncoder~\citep{li2023starcoder}, and CodeBERT~\cite{codebert}. However, as the performance of the translation task heavily depends on the relevance and the alignment of the retrieved examples with respect to the query \( c^s \), as we will show in the following discussion, 
it is beneficial to optimize $\Psi$ for better code translation performance.  In this manuscript, we use notation $\Psi$ and $\Phi$ to refer to aligned and unaligned embedding modules, respectively.

\begin{figure*}[!ht]
    \centering
    \includegraphics[width=.7\linewidth]{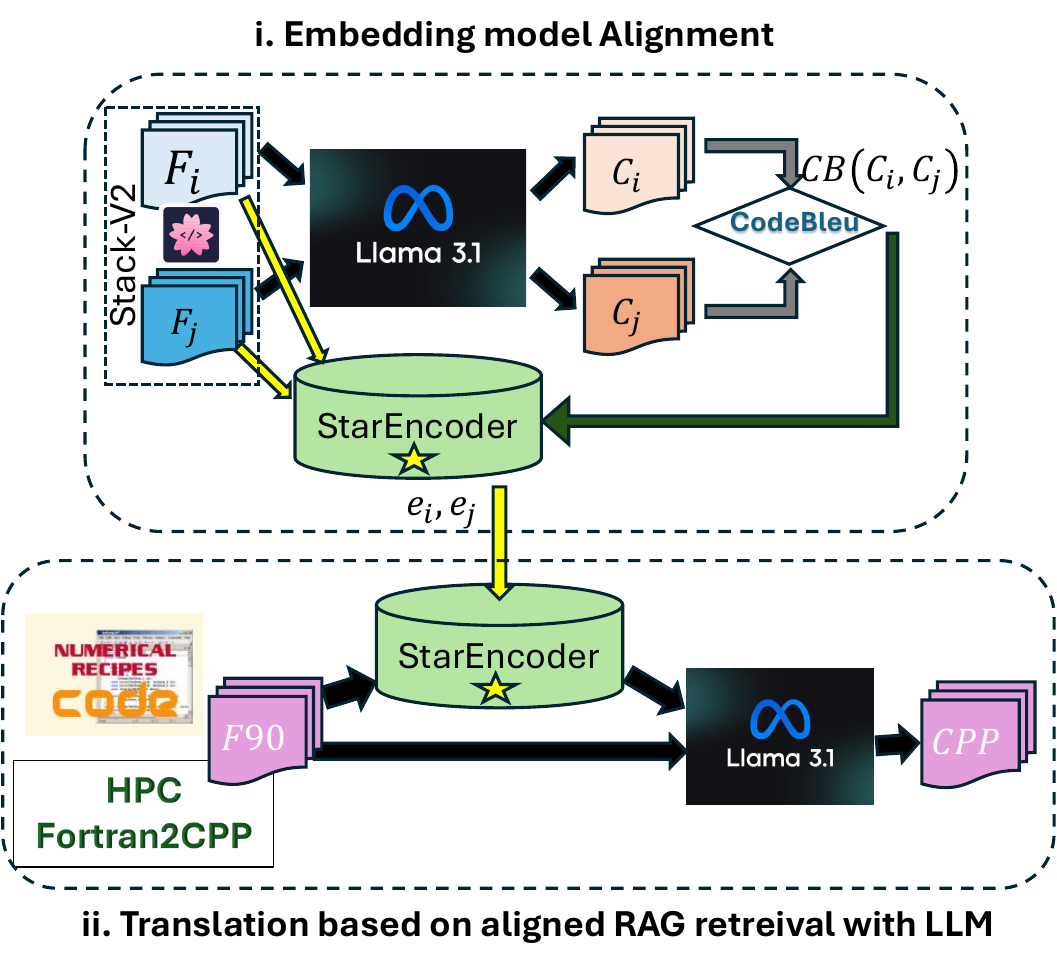}
  \caption{Overview of the proposed pipeline. i) The LLM generates pairwise code translations, which are evaluated using the CodeBLEU metric. ii) The resulting similarity scores are used to guide contrastive learning for semantic alignment of the embedding model.} \label{fig:overview}
\end{figure*}

\subsection{Task-Specific Embedding Alignment}

Our method involves aligning the Fortran embedding model $\Psi$ using contrastive learning based on CodeBLEU similarity scores, followed by applying this aligned model within a RAG framework for improved cross-language code translation from Fortran to C++, as shown in Figure~\ref{fig:overview}I.

\textbf{Embedding Similarity:} Given a pre-trained embedding module $\Phi$, we directly leverage the CodeBLEU similarity computed from the language model $G$ to train an aligned embedding module $\Psi$ for the downstream code translation task. The following discusses how to extract the CodeBLEU similarity from $G$.

From a source dataset of Fortran code snippets $\mathcal{D}^F = \{ c^s_i \}_{i=1}^N$, we generate the corresponding C++ translations $\mathcal{D}^C = \{ c^t_i \}_{i=1}^N$ using $G$ without RAG retrieval:
\begin{equation}
    c^t_i = G(c^s_i), \quad \forall i = 1, \dots, N
    \label{eq:translation}
\end{equation}

Then, we compute the pairwise CodeBLEU similarity scores~\citep{ren2020codebleu} between all generated translation pairs $(c^t_i, c^t_j)$:
\begin{equation}
    S^t_{ij} = \text{CodeBLEU}(c^t_i, c^t_j)
    \label{eq:codebleu_score}
\end{equation}
where the CodeBLEU score matrix $S^t \in [0, 1]^{N \times N}$ is a weighted linear combination of four components: the n-gram match $S_{\text{n-gram}}$, the weighted n-gram match $S_{\text{w-n-gram}}$, the syntactic AST match $S_{\text{syntax}}$, and the semantic data flow match $S_{\text{semantic}}$. These components capture the syntactic and semantic similarities between the generated C++ translations:

\begin{itemize}
    \item $S_{\text{n-gram}}$ is the traditional BLEU score up to n-grams.
    \item $S_{\text{w-n-gram}}$ assigns weights to n-grams based on their importance.
    \item $S_{\text{syntax}}$ measures the similarity between the abstract syntax trees (AST) of the code snippets.
    \item $S_{\text{semantic}}$ assesses the similarity in data flow between code snippets.
\end{itemize}

Intuitively, a high value of $S^t_{ij}$ indicates that the source code snippets $c^s_i$ and $c^s_j$, when translated by $G$, produce similar target code, suggesting that $c^s_i$ and $c^s_j$ are semantically similar with respect to the translation task. Therefore, our approach aims to learn a fine-tuned embedding module $\Psi$ that utilizes $S^t_{ij}$ to enhance code embedding alignment. The approach is expected to guide $\Psi$ in a way that enhances the code translation task leveraging $G$.

\textbf{Embedding Alignment:}

To align the embedding space of code snippets with the semantic similarities measured by CodeBLEU, we propose the Soft Information Noise-Contrastive Estimation (S-InfoNCE) loss applied to the embeddings resulting from the trainable embedding module $\Psi$. On a high level, our proposed S-InfoNCE can be considered a soft version of the InfoNCE loss proposed for contrastive learning~\citep{oord2018representation}.

Given a batch of $N$ code snippets, we compute their embeddings $c^s_{\Psi_i} = \Psi(c^s_i)$ and then calculate the pairwise cosine similarities between those embeddings, scaled by a temperature parameter $\tau > 0$:
\begin{equation}
    S^s_{\Psi_{ij}} = \frac{1}{\tau} \frac{ c^s_{\Psi_i} \cdot c^s_{\Psi_j} }{ \| c^s_{\Psi_i} \| \| c^s_{\Psi_j} \| }
    \label{eq:cosine_similarity}
\end{equation}

Our proposed S-InfoNCE loss integrates these continuous similarity scores to weigh the contribution of each pair. Specifically, the loss component between code $i$ with respect to code $j$ is given as:
\begin{equation}
    l_{ij}^{\textup{S-InfoNCE}}(\Psi) = - S^t_{ij} \log \left( \frac{\exp(S^s_{\Psi_{ij}})}{\sum_{k=1}^N \exp(S^s_{\Psi_{ik}})} \right)
    \label{eq:info_nce_loss}
\end{equation}
and the S-InfoNCE loss is the sum over all code pairs:
\begin{equation}
    \mathcal{L}^{\textup{S-InfoNCE}}(\Psi) = \sum_{i = 1}^N \sum_{j = 1}^N l_{ij}^{\textup{S-InfoNCE}}(\Psi)
    \label{eq:loss_soft_sum}
\end{equation}
Finally, the embedding $\Psi$ is optimized by minimizing $\mathcal{L}^{\textup{S-InfoNCE}}(\Psi)$ using gradient descent.

Compared to the conventional InfoNCE loss for contrastive learning~\citep{oord2018representation}, our proposed loss differs in its usage of $S^t_{ij}$ as a soft indicator for encoding a continuous similarity between the pair $(i, j)$, rather than a binary indicator of class membership (same class or not). This gives rise to the term \textit{soft} InfoNCE, or S-InfoNCE. In the typical InfoNCE loss, the term $l_{ij}$ is included only if the pair $(i, j)$ belongs to the same class, assuming discrete classes are available. However, since such discrete class labels do not exist in the code translation task, we adopt $S^t_{ij}$ as a soft version of this indicator function, allowing for a more nuanced representation of similarity between code pairs.

\begin{lemma}
The stationary points of the S-InfoNCE loss (Equation~\ref{eq:loss_soft_sum}) satisfy:
\begin{equation}
    \frac{\exp(S^s_{\Psi^*_{ij}})}{\sum_{k=1}^N \exp(S^s_{\Psi^*_{ik}})} = \frac{S^t_{ij}}{\sum_{k=1}^N S^t_{ik}},
    \label{eq:stationary_condition}
\end{equation}
for all \( i, j \in \{1, \dots, N\} \).

Furthermore, the optimal loss is the weighted sum of the entropy of the CodeBLEU similarity distribution for each input code \( i \):
\begin{equation}
    \mathcal{L}^{\textup{S-InfoNCE}}(\Psi^*) = \sum_{i=1}^N \left( \sum_{k=1}^N S^t_{ik} \right) H(\boldsymbol{p}^t_i),
    \label{eq:optimal_loss}
\end{equation}
where \( H \) is the entropy function and \( \boldsymbol{p}^t_i \) is a probability vector whose \( j \)-th component is
\begin{equation}
    p^t_{ij} = \frac{S^t_{ij}}{\sum_{k=1}^N S^t_{ik}}.
    \label{eq:ptij}
\end{equation}
\end{lemma}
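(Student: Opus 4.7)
The plan is to recognize the S-InfoNCE loss as a weighted sum of cross-entropies, one per anchor index $i$, and then apply nonnegativity of KL divergence (Gibbs' inequality) to identify its minimizer. First I would set $T_i := \sum_{k=1}^N S^t_{ik}$ so that $p^t_{ij} = S^t_{ij}/T_i$ is the probability vector in Eq.~\ref{eq:ptij}, and write the softmax as $q^s_{ij}(\Psi) := \exp(S^s_{\Psi_{ij}}) / \sum_{k=1}^N \exp(S^s_{\Psi_{ik}})$. Substituting into Eq.~\ref{eq:loss_soft_sum} yields
\[\mathcal{L}^{\textup{S-InfoNCE}}(\Psi) = \sum_{i=1}^N T_i \sum_{j=1}^N p^t_{ij}\bigl(-\log q^s_{ij}(\Psi)\bigr) = \sum_{i=1}^N T_i\, H\bigl(p^t_i,\, q^s_i(\Psi)\bigr),\]
where $H(p,q) = -\sum_j p_j \log q_j$ denotes cross-entropy. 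This rewriting is the linchpin: it decouples the interaction across $i$ and exposes the classical cross-entropy structure hidden by the non-normalized weights $S^t_{ij}$.

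Next I would apply the identity $H(p,q) = H(p) + \mathrm{KL}(p\|q)$ to split the loss into a $\Psi$-independent entropy term plus a nonnegative KL term:
\[\mathcal{L}^{\textup{S-InfoNCE}}(\Psi) = \sum_{i=1}^N T_i\, H(p^t_i) + \sum_{i=1}^N T_i\, \mathrm{KL}\bigl(p^t_i\,\big\|\, q^s_i(\Psi)\bigr).\]
Gibbs' inequality gives $\mathrm{KL}\geq 0$ with equality iff $q^s_i(\Psi) = p^t_i$ for every $i$, which is precisely Eq.~\ref{eq:stationary_condition}. Evaluating at any such $\Psi^*$ collapses the loss to $\sum_i T_i\, H(p^t_i)$, which is exactly Eq.~\ref{eq:optimal_loss}. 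As a cross-check of the stationary-point phrasing, I would also differentiate Eq.~\ref{eq:loss_soft_sum} directly with respect to a single entry of $S^s$: a short calculation yields $\partial\mathcal{L}/\partial S^s_{\Psi_{il}} = -S^t_{il} + T_i\, q^s_{il}(\Psi)$, which vanishes exactly when $q^s_{il} = p^t_{il}$, confirming that Eq.~\ref{eq:stationary_condition} is the first-order critical equation on the logits.

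The hard part will be interpretational rather than computational: $\Psi$ is a highly constrained parameterization of the similarity matrix (the $S^s$ are symmetric cosine similarities, bounded in $[-1/\tau,\, 1/\tau]$, with rank bounded by the embedding dimension), so the condition in Eq.~\ref{eq:stationary_condition} need not be simultaneously realizable for an arbitrary CodeBLEU target $S^t$. I would therefore state the lemma as the idealized first-order condition in the unconstrained logits $S^s_{\Psi_{ij}}$, which is the standard interpretation in contrastive representation-learning analyses; the characterization becomes tight whenever the embedding family is expressive enough to realize the target softmax on the right-hand side of Eq.~\ref{eq:stationary_condition}, and otherwise should be read as the minimizer of a capacity-restricted surrogate.
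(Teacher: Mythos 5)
Your proposal is correct, and it reaches the lemma by a genuinely different route than the paper. The paper treats the softmax outputs $p_{ij}(\Psi)$ as free variables subject only to the row-normalization constraints, forms a Lagrangian, and solves the first-order conditions $-\alpha_{ij}/p_{ij}+\lambda_i=0$ to obtain $p_{ij}(\Psi^*)=S^t_{ij}/\sum_k S^t_{ik}$, then substitutes back to get the entropy expression. You instead rewrite the loss as $\sum_i T_i\,H\bigl(p^t_i, q^s_i(\Psi)\bigr)$ and invoke $H(p,q)=H(p)+\mathrm{KL}(p\|q)$ with Gibbs' inequality, which buys you strictly more: it identifies the condition in Eq.~\eqref{eq:stationary_condition} as the \emph{global minimizer} (when realizable), not merely a stationary point, and delivers Eq.~\eqref{eq:optimal_loss} immediately as the KL-free residual; your direct logit derivative $\partial\mathcal{L}/\partial S^s_{\Psi_{il}}=-S^t_{il}+T_i\,q^s_{il}(\Psi)$ then matches the "stationary points" phrasing of the lemma even more literally than the paper's Lagrangian-in-$p$ argument, since it does not need to re-impose normalization constraints that the softmax already satisfies. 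Both arguments share the same idealization—optimizing over the similarity logits (or the softmax probabilities) as if they were unconstrained, ignoring that $S^s_{\Psi}$ is a symmetric, temperature-scaled cosine-similarity matrix of bounded rank produced by $\Psi$, so the condition need not be attainable for an arbitrary CodeBLEU matrix $S^t$—but you flag this explicitly, whereas the paper leaves it implicit; that caveat is a useful honest strengthening rather than a gap.
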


\begin{proof}
For brevity, let us define:
\begin{itemize}
    \item \( \alpha_{ij} = S^t_{ij} \): the CodeBLEU similarity between the target code translations \( c^t_i \) and \( c^t_j \).
    \item \( p_{ij}(\Psi) = \frac{\exp(S^s_{\Psi_{ij}})}{Z_i} \), where \( Z_i = \sum_{k=1}^N \exp(S^s_{\Psi_{ik}}) \): the normalized exponential of the cosine similarity between the embeddings of source code snippets \( c^s_i \) and \( c^s_j \).
\end{itemize}

The S-InfoNCE loss can be rewritten as:
\begin{equation}
    \mathcal{L}^{\textup{S-InfoNCE}}(\Psi) = - \sum_{i=1}^N \sum_{j=1}^N \alpha_{ij} \log p_{ij}(\Psi).
    \label{eq:loss_rewritten}
\end{equation}

Our goal is to minimize \( \mathcal{L}^{\textup{S-InfoNCE}}(\Psi) \) with respect to \( \Psi \). This can be viewed as a constrained optimization problem over the variables \( p_{ij}(\Psi) \), subject to the normalization constraints:
\begin{equation}
    \sum_{j=1}^N p_{ij}(\Psi) = 1, \quad \forall i \in \{1, \dots, N\}.
\end{equation}

We formulate the Lagrangian \( \mathcal{L} \) as:
\begin{equation}
    \mathcal{L} = - \sum_{i=1}^N \sum_{j=1}^N \alpha_{ij} \log p_{ij}(\Psi) + \sum_{i=1}^N \lambda_i \left( \sum_{j=1}^N p_{ij}(\Psi) - 1 \right).
    \label{eq:lagrangian}
\end{equation}

To find the stationary points, we take the derivative of \( \mathcal{L} \) with respect to \( p_{ij}(\Psi) \) and set it to zero:
\begin{equation}
    \frac{\partial \mathcal{L}}{\partial p_{ij}(\Psi)} = - \frac{\alpha_{ij}}{p_{ij}(\Psi)} + \lambda_i = 0.
    \label{eq:derivative}
\end{equation}

Solving for \( p_{ij}(\Psi) \), we get:
\begin{equation}
    p_{ij}(\Psi) = \frac{\alpha_{ij}}{\lambda_i}.
    \label{eq:pij_solution}
\end{equation}

Applying the normalization constraint:
\begin{align}
    \sum_{j=1}^N p_{ij}(\Psi) &= \sum_{j=1}^N \frac{\alpha_{ij}}{\lambda_i} = 1, \notag \\
    \lambda_i &= \sum_{j=1}^N \alpha_{ij}.
    \label{eq:lambda_solution}
\end{align}

Substituting \( \lambda_i \) back into \( p_{ij}(\Psi) \), we obtain:
\begin{equation}
    p_{ij}(\Psi^*) = \frac{\alpha_{ij}}{\sum_{k=1}^N \alpha_{ik}} = \frac{S^t_{ij}}{\sum_{k=1}^N S^t_{ik}}.
    \label{eq:pij_final}
\end{equation}

This establishes the stationary condition in Equation~\eqref{eq:stationary_condition}.

\textbf{Optimal Loss Calculation:}

Substituting \( p_{ij}(\Psi^*) \) back into the loss function:
\begin{align}
    \mathcal{L}^{\textup{S-InfoNCE}}(\Psi^*) &= - \sum_{i=1}^N \sum_{j=1}^N \alpha_{ij} \log \left( \frac{\alpha_{ij}}{\sum_{k=1}^N \alpha_{ik}} \right) \notag \\
    &= - \sum_{i=1}^N \left( \sum_{j=1}^N \alpha_{ij} \log \alpha_{ij} \right) \notag \\
    & \quad + \sum_{i=1}^N \left( \sum_{j=1}^N \alpha_{ij} \right) \log \left( \sum_{k=1}^N \alpha_{ik} \right).
    \label{eq:loss_optimal}
\end{align}

Let \( A_i = \sum_{k=1}^N \alpha_{ik} \) and \( p^t_{ij} = \dfrac{\alpha_{ij}}{A_i} \). Then, the loss simplifies to:
\begin{align}
    \mathcal{L}^{\textup{S-InfoNCE}}(\Psi^*) &= - \sum_{i=1}^N \left( A_i \sum_{j=1}^N p^t_{ij} \log p^t_{ij} \right) \notag \\
    &= \sum_{i=1}^N A_i H(\boldsymbol{p}^t_i),
\end{align}

where \( H(\boldsymbol{p}^t_i) = - \sum_{j=1}^N p^t_{ij} \log p^t_{ij} \) is the entropy of the probability distribution \( \boldsymbol{p}^t_i \).

Therefore, the optimal loss is:
\begin{equation}
    \mathcal{L}^{\textup{S-InfoNCE}}(\Psi^*) = \sum_{i=1}^N \left( \sum_{k=1}^N S^t_{ik} \right) H(\boldsymbol{p}^t_i).
\end{equation}
\end{proof}

From the lemma, we can see that minimizing the S-InfoNCE loss encourages embeddings of semantically similar code snippets, i.e., those with higher target CodeBLEU score $S^t_{ij}$, to have higher cosine similarities $S^s_{\Psi_{ij}}$, thereby aligning them closer in the embedding space. The temperature parameter \( \tau \) controls the concentration of the distribution: a lower \( \tau \) sharpens the softmax distribution, making the embedding model focus more on the most similar pairs.

\textbf{Retrieval-Augmented Generation with Aligned Embeddings:}

After aligning the embedding model \( \Psi \), we integrate it into theRAG framework to enhance the translation process (Figure~\ref{fig:overview}II). Given a query Fortran code snippet \( c^s_q \), we compute its embedding:

\begin{equation}
    c^s_{\Psi_q} = \Psi(c^s_q).
\end{equation}

We retrieve the top-\( k \) Fortran code snippets \( \{ c^s_{r_1}, c^s_{r_2}, \ldots, c^s_{r_k} \} \) from the corpus \( \mathcal{C} \) by maximizing the cosine similarity between embeddings:

\begin{equation}
    c^s_{r_j} = \arg\max_{c^s \in \mathcal{C}} \text{sim}\left( c^s_{\Psi_q}, \Psi(c^s) \right), \quad j = 1, 2, \ldots, k.
\end{equation}

The corresponding C++ translations \( \{ c^t_{r_1}, c^t_{r_2}, \ldots, c^t_{r_k} \} \) are retrieved alongside the source code snippets.

These retrieved pairs \( \{ (c^s_{r_j}, c^t_{r_j}) \}_{j=1}^k \) are used to augment the input to the language model \( G \), providing additional context:

\begin{equation}
    \hat{c}^t_q = G\left( c^s_q, \{ (c^s_{r_j}, c^t_{r_j}) \}_{j=1}^k \right).
\end{equation}

By incorporating the optimized embedding function \( \Psi \) into the RAG setup, we enhance the performance of the language model without the need for fine-tuning. The retrieval mechanism now provides more relevant examples that are closely aligned with the translation task, leading to more accurate and aligned translations as demonstrated in Appendix~\ref{appdx}.


\section{Experiments and Results}
In our study, we utilized three datasets to enhance code translation through RAG and embedding alignment. The HPC Fortran2CPP dataset~\citep{lei2023creating}, comprising 315 Fortran-C++ code pairs, and the Numerical Recipes dataset~\citep{press1988numerical}, containing 298 Fortran-C++ pairs, were employed for RAG retrieval and evaluation with LLMs. Additionally, we used the Stack-V2 dataset~\citep{lozhkov2024starcoder}, which includes over 500,000 Fortran code snippets, for RAG alignment. From Stack-V2, we sampled 25,000 high-quality and diverse Fortran code snippets by selecting files larger than 500 bytes and prioritizing those with the highest combined star and fork counts, indicating relevance and popularity. Since Stack-V2 lacks Fortran-C++ pairs, we extracted files containing metadata, code, and comments, and utilized the Llama 3.1-70B Instruct model to extract executable Fortran code, discarding other metadata.
\begin{figure*}[!h]
    \centering
    \includegraphics[width=.9\linewidth]{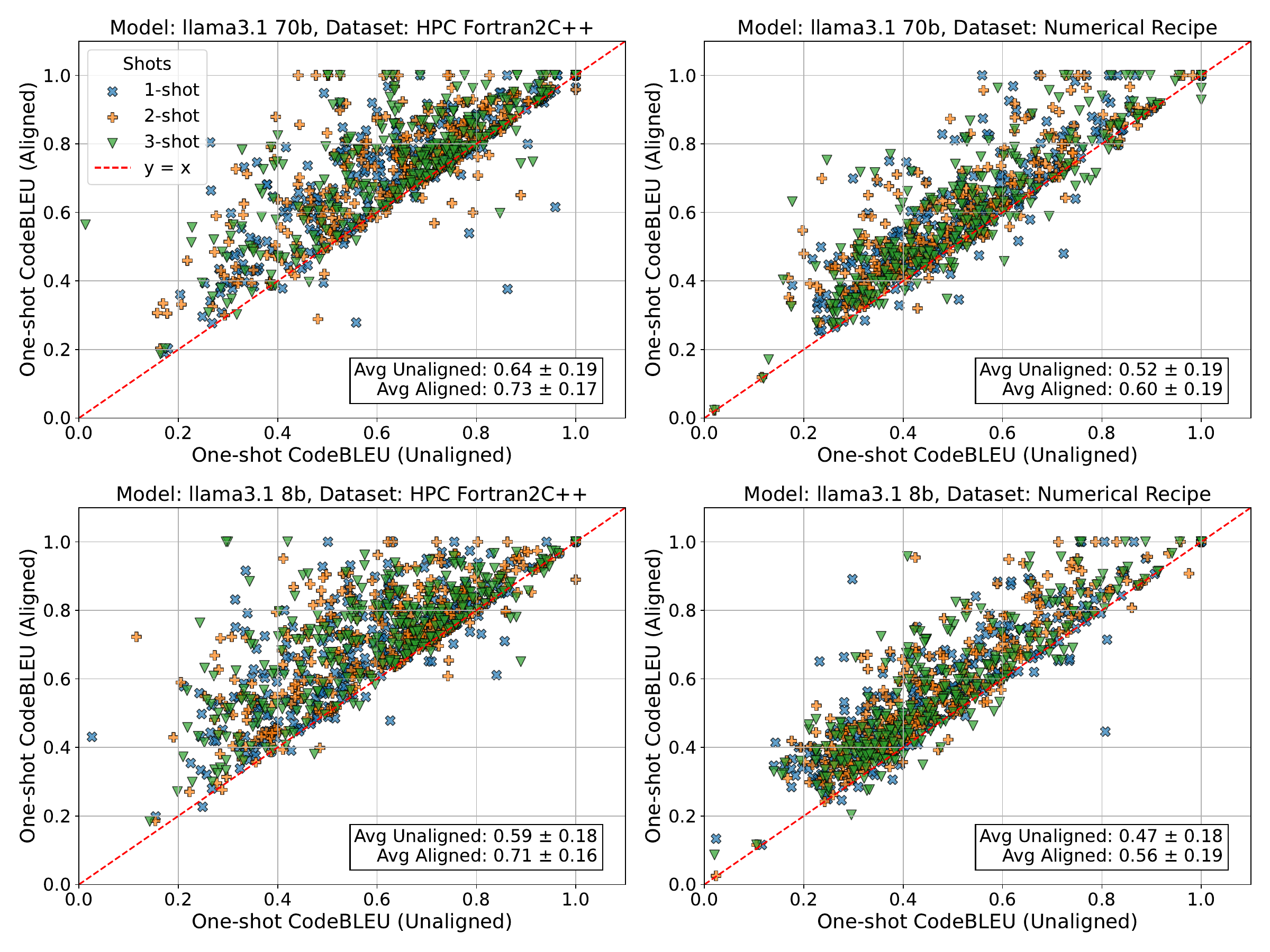}
\caption{Scatter plots comparing the unaligned and aligned One-shot CodeBLEU scores across different shot counts (1-shot, 2-shot, 3-shot) for two models (llama3.1 70b and llama3.1 8b) and two datasets (Numerical Recipe and HPC Fortran2C++ Dataset). Each point represents a shot count, and the red dashed line represents the reference where the unaligned and aligned scores are equal. The text box in each subplot displays the average CodeBLEU performance and standard deviation for aligned vs. unaligned RAG translation across the few-shot configurations.}

    \label{fig:scatter}
\end{figure*}
We selected the StarCoder model~\citep{li2023starcoder} as the embedding backbone for our RAG pipeline and aligned it using contrastive learning on the Stack-V2 dataset. Initially, we use the LLaMA 3.1-8B model to translate the cleaned Fortran code snippets into corresponding C++ code. After code translaton, we computed pairwise CodeBLEU scores between the generated C++ code snippets to quantify the syntactic and semantic similarities of their translations. Leveraging these CodeBLEU metrics and the embeddings from the Fortran codes, we employed the proposed Soft-InfoNCE loss function with a temperature of 0.1 to align the embeddings, effectively training the embedding model to map semantically similar code snippets closer in the embedding space. 

The embedding model was trained using the Adam optimizer with a learning rate of $10^{-3}$ and a batch size of 128 per GPU, sampling approximately 1,280,000 code pairs for alignment. This training process was distributed across 256 GH200 GPUs to accelerate the process, though it can also be performed on fewer GPUs at a significantly slower pace. After alignment, we integrated the embedding model into the RAG pipeline, storing Fortran-C++ pairs along with their Fortran embeddings in a vector database. We then evaluated the performance using the LLaMA 3.1-8B, LLaMA 3.1-70B, Mistral123B, and Mixtral 8x22B models—all instruct-tuned—under zero-shot, 1-shot, 2-shot, and 3-shot settings. The evaluation was conducted on the benchmark datasets HPC Fortran2C++ and Numerical Recipes, following the setup described by~\citep{bhattarai2024enhancing}. The CodeBLEU scores for both the aligned and unaligned models were obtained by comparing the RAG-augmented generated C++ translations against the ground truth C++ code.

  \begin{figure}[!h]
      \centering
      \includegraphics[width=1\linewidth]{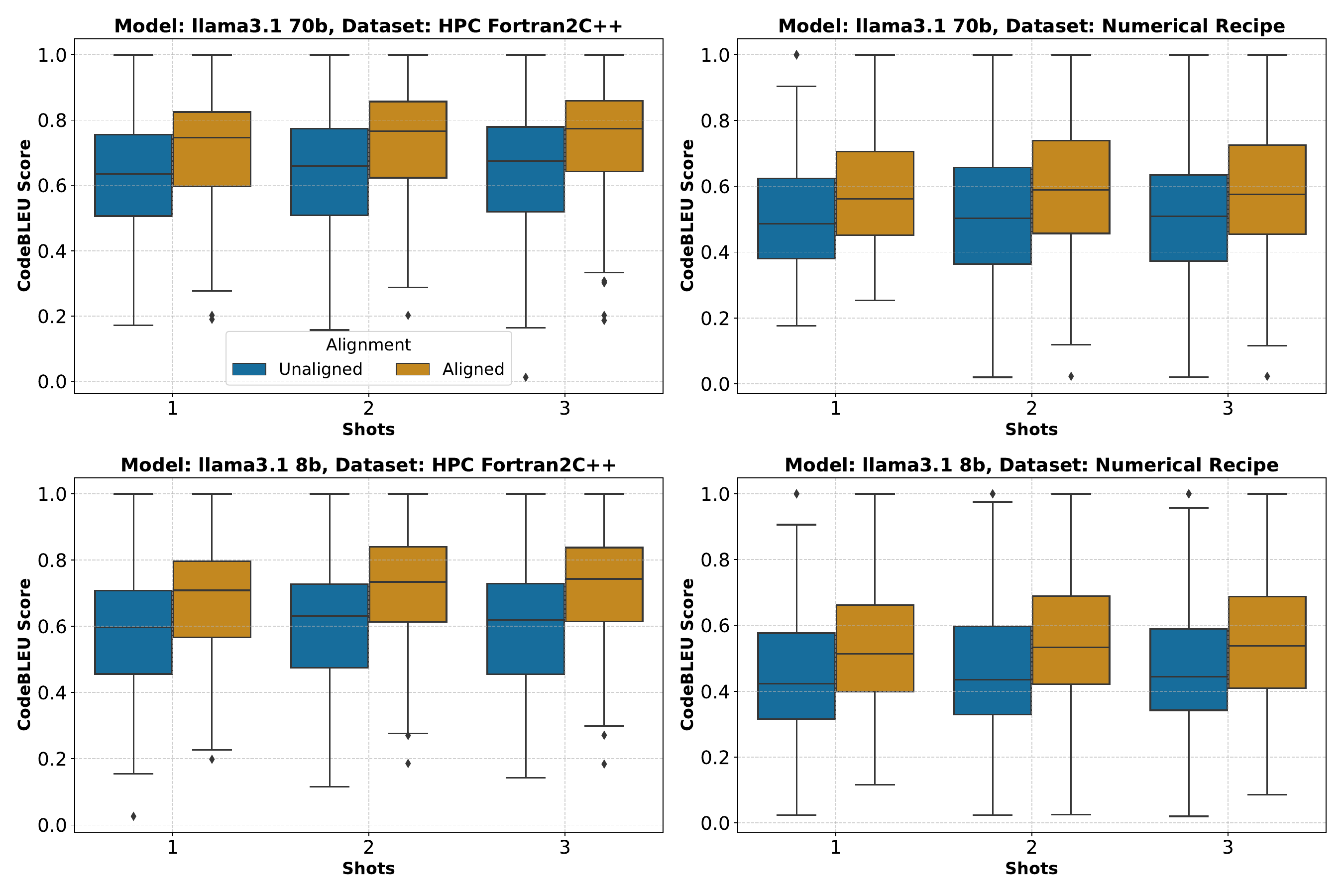}
      \caption{Box plots illustrating the distribution of CodeBLEU scores across various shot counts (1-shot, 2-shot, 3-shot) for both unaligned and aligned models. The results are presented for two models (llama3.1 70b and llama3.1 8b) across two datasets (Numerical Recipe and HPC Fortran2C++ Dataset)}
      \label{fig:boxplot}
  \end{figure}

\begin{table*}[!ht]
\caption{Delta in Mean CodeBLEU scores between Zero- and Few-Shot prompts. The values are presented as Unaligned/Aligned scores.}
\label{tab:mean_codebleu_deltas}

\centering
\begin{tabular}{llllll}
\toprule
& & \multicolumn{4}{c}{\textbf{$\Delta$ in CodeBLEU scores (Unaligned / Aligned)}} \\ 
\cmidrule{3-6}
\textbf{Dataset} & \textbf{Model} & \textbf{Zero-shot} & \textbf{1-shot} & \textbf{2-shot} & \textbf{3-shot} \\ \midrule
HPC Fortran2++ & llama3.1 70b & 0.364 & +0.262/+0.346 & +0.275/+0.371 & +0.281/+0.377 \\
 & llama3.1 8b & 0.342 & +0.237/+0.346 & +0.261/+0.376 & +0.252/+0.374 \\
 & mistral123b & 0.367 & +0.197/+0.241 & +0.210/+0.265 & +0.215/+0.271 \\
  & mixtral-8x22b & 0.376 & +0.237/+0.273 & +0.261/+0.344 & +0.233/+0.304 \\
 \toprule
numerical\_receipe & llama3.1 70b & 0.280 & +0.232/+0.313 & +0.243/+0.329 & +0.243/+0.317 \\
 & llama3.1 8b & 0.276 & +0.181/+0.268 & +0.195/+0.292 & +0.201/+0.289 \\
 & mistral123b & 0.281 & +0.138/+0.169 & +0.132/+0.183 & +0.135/+0.211 \\
  & mixtral-8x22b & 0.280 & +0.200/+0.245 & +0.228/+0.296 & +0.232/+0.312 \\
  
\bottomrule
\end{tabular}
\end{table*}

Figure~\ref{fig:scatter} shows scatter-plots of CodeBLEU scores for code samples produced using RAG retrieval with aligned versus unaligned embeddings derived from StarEncoder. Symbols crosses, pluses and triangles  respectively indicate whether the sample was evaluated using a 1-shot, 2-shot, or 3-shot method. The red dashed lines delineates the boundary where the aligned samples have the same CodeBLEU score as the non-aligned ones, and across all four tested datasets, we observed a  majority of samples above the red line, indicating that the aligned model produces translated codes closer to ground truth. In other words, the results in Figure~\ref{fig:scatter} demonstrate that aligned embeddings significantly improve translation quality for each Fortran-to-C++ code translation task. Specifically, on the HPC Fortran2C++ dataset, averaged over all shot counts and models, the aligned embeddings achieved an average CodeBLEU score of 0.73, whereas unaligned embeddings achieve 0.64. On the Numerical Recipes dataset, aligned embeddings yielded an average CodeBLEU score of 0.60, outperforming the unaligned case at 0.52. These substantial improvements highlight the effectiveness of our method in enhancing translation accuracy.

Figure~\ref{fig:boxplot} further corroborates these findings by presenting the distribution of CodeBLEU scores across various experimental configurations. The box plots reveal that aligned embeddings not only increase the median scores but also reduce performance variability. This indicates that our approach consistently enhances translation quality and leads to more reliable code translations. The consistent improvements across different model sizes (8B and 70B parameters) and datasets demonstrate the robustness and scalability of our method.

Table~\ref{tab:mean_codebleu_deltas} presents the mean CodeBLEU scores for zero-shot and few-shot prompting strategies using both unaligned and aligned embedding models across different language models and datasets.
 A key observation is that the aligned embedding models consistently achieve higher CodeBLEU scores compared to unaligned models when transitioning from zero-shot to few-shot settings.
 For instance, on the HPC Fortran2C++ dataset with the \texttt{LLaMA3.1 70B} model, the aligned model improves from 0.364 to 0.710 (+0.346) in the 1-shot setting, surpassing the unaligned model's improvement from 0.364 to 0.626 (+0.262). Similar trends are observed with the \texttt{LLaMA3.1 8B} model, where the aligned model increases from 0.342 to 0.688 (+0.346), compared to the unaligned model's increase from 0.342 to 0.579 (+0.237). The \texttt{Mistral 13B} and \texttt{Mixtral 8x22B} models also exhibit greater improvements with aligned embeddings in few-shot settings, confirming the benefit of embedding alignment across different architectures.

On the Numerical Recipes dataset, the aligned models again demonstrate superior improvements over unaligned models. For example, the \texttt{LLaMA3.1 70B} aligned model improves from 0.280 to 0.593 (+0.313) in the 1-shot setting, exceeding the unaligned model's increase from 0.280 to 0.512 (+0.232). This consistent pattern across datasets reinforces the advantage of embedding alignment in enhancing code translation performance.

These results indicate that embedding alignment significantly enhances the models' capacity to exploit few-shot prompts, leading to superior code translation performance as measured by CodeBLEU scores. Alignment optimizes the embedding space to better capture the syntactic and semantic nuances of code translation tasks, thereby augmenting the models' few-shot learning capabilities.
Additionally, larger models tend to outperform smaller ones. The \texttt{LLaMA3.1 70B} model consistently achieves higher CodeBLEU scores than the \texttt{LLaMA3.1 8B} model across both datasets and embedding types. The strong performance of the \texttt{Mixtral 8x22B} model, which combines multiple experts, highlights the benefits of increased model capacity.
Furthermore, diminishing marginal gains are observed when increasing the number of shots beyond two, suggesting that the majority of performance improvements are realized with just one or two examples. This indicates that while few-shot examples are beneficial, adding more beyond a certain point yields limited additional gains.

\section{Conclusion}
We introduced a novel method for enhancing cross-language code translation from Fortran to C++ by aligning embeddings within a RAG  framework. By leveraging contrastive learning based on CodeBLEU similarity scores, we aligned the Fortran embedding model so that code snippets yielding high-quality translations are positioned closer in the embedding space. This alignment enables the RAG system to retrieve semantically meaningful examples that effectively guide th LLM during code generation. Our experimental results demonstrate substantial improvements in translation quality without the need for fine-tuning the LLM. Specifically, using aligned embeddings increased the average CodeBLEU score from 0.64 to 0.73 on the HPC Fortran2C++ dataset and from 0.52 to 0.60 on the Numerical Recipes dataset, representing relative improvements of approximately $14\%$ and $15\%$, respectively. The larger model (\texttt{llama3.1 70b}) consistently outperformed the smaller model (\texttt{llama3.1 8b}), indicating that increased model capacity enhances the effectiveness of our approach. Additionally, we observed diminishing returns beyond two-shot prompting, suggesting that most performance gains are achieved with just one or two examples.
Thus, our approach significantly improves code translation performance by optimizing the retrieval mechanism through task-specific embedding alignment, rather than relying on computationally expensive fine-tuning of the LLM. This method is computationally efficient, scalable, and adaptable to other code translation tasks, particularly when aligned datasets are scarce or evaluation metrics like CodeBLEU are critical. Future work could extend this alignment strategy to additional programming languages and explore integrating other evaluation metrics to further enhance translation quality.

\section{Limitations}
Our approach leverages CodeBLEU as a task-specific metric for performing contrastive learning via a custom Soft-InfoNCE loss in the alignment of embedding models for code translation. While this approach introduces several improvements, it also brings specific limitations.
First, using CodeBLEU as the basis for contrastive learning focuses primarily on syntactic and semantic alignment, which may not always translate into functional equivalence. CodeBLEU, while effective at evaluating linguistic features of generated code, does not fully capture the functional behavior of code, meaning that two semantically similar snippets could still behave differently at runtime~\citep{ren2020codebleu}. This limitation can lead to cases where the retrieval mechanism selects semantically similar but functionally incorrect examples, impacting the overall quality of the translation task.
Second, contrastive learning, particularly with InfoNCE loss, relies heavily on the assumption that maximizing the similarity between pairs (based on CodeBLEU) leads to better downstream performance. However, InfoNCE loss is limited by its focus on pulling positive samples closer while pushing away negative ones, which in the case of code translation, does not always capture the subtle nuances of code equivalence across languages ~\citep{khosla2020supervised}. Code snippets with different syntactic structures but similar functionality may be treated as negative examples, leading to a misaligned embedding space and suboptimal retrieval.
Third, the granularity of the CodeBLEU score presents an inherent challenge. Since CodeBLEU provides a continuous similarity metric (between 0 and 1), aligning embeddings through InfoNCE loss may not fully capture the wide range of functional similarities or dissimilarities between code snippets. This results in an embedding space that reflects linguistic rather than purely functional similarity, which can lead to errors in retrieval when applied to real-world translation tasks where functional correctness is paramount ~\citep{codebert}.
Additionally, the use of CodeBLEU as a basis for contrastive learning is highly dependent on the quality of the generated code samples and their reference translations. Any noise or imperfections in the training data (e.g., low-quality code or inconsistent style) may degrade the alignment process. Since InfoNCE relies on subtle positive and negative distinctions, noisy CodeBLEU scores can introduce ambiguity, further distorting the learning process and leading to poorer retrievals during generation~\cite{wang2021understanding}.


\bibliography{main}

\begin{thebibliography}{17}
\providecommand{\natexlab}[1]{#1}

\bibitem[{Bhattarai et~al.(2024)Bhattarai, Santos, Jones, Biswas, Alexandrov, and O'Malley}]{bhattarai2024enhancing}
Manish Bhattarai, Javier~E Santos, Shawn Jones, Ayan Biswas, Boian Alexandrov, and Daniel O'Malley. 2024.
\newblock Enhancing code translation in language models with few-shot learning via retrieval-augmented generation.
\newblock \emph{arXiv preprint arXiv:2407.19619}.

\bibitem[{Chen et~al.(2021)Chen, Tworek, Jun, Yuan, Pinto, Kaplan, Edwards, Burda, Joseph, Brockman et~al.}]{ccodex}
Mark Chen, Jerry Tworek, Heewoo Jun, Qiming Yuan, Henrique Ponde De~Oliveira Pinto, Jared Kaplan, Harri Edwards, Yuri Burda, Nicholas Joseph, Greg Brockman, et~al. 2021.
\newblock Evaluating large language models trained on code.
\newblock \emph{arXiv preprint arXiv:2107.03374}.

\bibitem[{Feng et~al.(2020)Feng, Guo, Tang, Duan, Feng, Gong, Shou, Qin, Liu, Jiang, and Zhou}]{codebert}
Zhangyin Feng, Daya Guo, Duyu Tang, Nan Duan, Xiaocheng Feng, Ming Gong, Linjun Shou, Bing Qin, Ting Liu, Daxin Jiang, and Ming Zhou. 2020.
\newblock \href {https://doi.org/10.18653/v1/2020.findings-emnlp.139} {{C}ode{BERT}: A pre-trained model for programming and natural languages}.
\newblock In \emph{Findings of the Association for Computational Linguistics: EMNLP 2020}, pages 1536--1547, Online. Association for Computational Linguistics.

\bibitem[{Khosla et~al.(2020)Khosla, Teterwak, Wang, Sarna, Tian, Isola, Maschinot, Liu, and Krishnan}]{khosla2020supervised}
Prannay Khosla, Piotr Teterwak, Chen Wang, Aaron Sarna, Yonglong Tian, Phillip Isola, Aaron Maschinot, Ce~Liu, and Dilip Krishnan. 2020.
\newblock Supervised contrastive learning.
\newblock \emph{Advances in neural information processing systems}, 33:18661--18673.

\bibitem[{Koehn(2009)}]{smt}
Philipp Koehn. 2009.
\newblock \emph{Statistical machine translation}.
\newblock Cambridge University Press.

\bibitem[{Lei et~al.(2023)Lei, Ding, Chen, Lin, and Liao}]{lei2023creating}
Bin Lei, Caiwen Ding, Le~Chen, Pei-Hung Lin, and Chunhua Liao. 2023.
\newblock Creating a dataset for high-performance computing code translation using llms: A bridge between openmp fortran and c++.
\newblock In \emph{2023 IEEE High Performance Extreme Computing Conference (HPEC)}, pages 1--7. IEEE.

\bibitem[{Lewis et~al.(2020)Lewis, Perez, Piktus, Petroni, Karpukhin, Goyal, K{\"u}ttler, Lewis, Yih, Rockt{\"a}schel et~al.}]{rag}
Patrick Lewis, Ethan Perez, Aleksandra Piktus, Fabio Petroni, Vladimir Karpukhin, Naman Goyal, Heinrich K{\"u}ttler, Mike Lewis, Wen-tau Yih, Tim Rockt{\"a}schel, et~al. 2020.
\newblock Retrieval-augmented generation for knowledge-intensive nlp tasks.
\newblock \emph{Advances in Neural Information Processing Systems}, 33:9459--9474.

\bibitem[{Li et~al.()Li, Li, and Wang}]{liretrieval}
Chuangji Li, Shizhuo Li, and Alan Wang.
\newblock Retrieval-augmented multi-hop code generation with codellama and unlimiformer.

\bibitem[{Li et~al.(2023)Li, Allal, Zi, Muennighoff, Kocetkov, Mou, Marone, Akiki, Li, Chim et~al.}]{li2023starcoder}
Raymond Li, Loubna~Ben Allal, Yangtian Zi, Niklas Muennighoff, Denis Kocetkov, Chenghao Mou, Marc Marone, Christopher Akiki, Jia Li, Jenny Chim, et~al. 2023.
\newblock Starcoder: may the source be with you!
\newblock \emph{arXiv preprint arXiv:2305.06161}.

\bibitem[{Lozhkov et~al.(2024)Lozhkov, Li, Allal, Cassano, Lamy-Poirier, Tazi, Tang, Pykhtar, Liu, Wei et~al.}]{lozhkov2024starcoder}
Anton Lozhkov, Raymond Li, Loubna~Ben Allal, Federico Cassano, Joel Lamy-Poirier, Nouamane Tazi, Ao~Tang, Dmytro Pykhtar, Jiawei Liu, Yuxiang Wei, et~al. 2024.
\newblock Starcoder 2 and the stack v2: The next generation.
\newblock \emph{arXiv preprint arXiv:2402.19173}.

\bibitem[{Mishra et~al.(2024)Mishra, Stallone, Zhang, Shen, Prasad, Soria, Merler, Selvam, Surendran, Singh et~al.}]{mishra2024granite}
Mayank Mishra, Matt Stallone, Gaoyuan Zhang, Yikang Shen, Aditya Prasad, Adriana~Meza Soria, Michele Merler, Parameswaran Selvam, Saptha Surendran, Shivdeep Singh, et~al. 2024.
\newblock Granite code models: A family of open foundation models for code intelligence.
\newblock \emph{arXiv preprint arXiv:2405.04324}.

\bibitem[{Nussbaum et~al.(2024)Nussbaum, Morris, Duderstadt, and Mulyar}]{nussbaum2024nomic}
Zach Nussbaum, John~X Morris, Brandon Duderstadt, and Andriy Mulyar. 2024.
\newblock Nomic embed: Training a reproducible long context text embedder.
\newblock \emph{arXiv preprint arXiv:2402.01613}.

\bibitem[{Press et~al.(1988)Press, Vetterling, Teukolsky, and Flannery}]{press1988numerical}
William~H Press, William~T Vetterling, Saul~A Teukolsky, and Brian~P Flannery. 1988.
\newblock \emph{Numerical recipes}.
\newblock Cambridge University Press, London, England.

\bibitem[{Ren et~al.(2020)Ren, Guo, Lu, Zhou, Liu, Tang, Sundaresan, Zhou, Blanco, and Ma}]{ren2020codebleu}
Shuo Ren, Daya Guo, Shuai Lu, Long Zhou, Shujie Liu, Duyu Tang, Neel Sundaresan, Ming Zhou, Ambrosio Blanco, and Shuai Ma. 2020.
\newblock Codebleu: a method for automatic evaluation of code synthesis.
\newblock \emph{arXiv preprint arXiv:2009.10297}.

\bibitem[{Touvron et~al.(2023)Touvron, Lavril, Izacard, Martinet, Lachaux, Lacroix, Rozi{\`e}re, Goyal, Hambro, Azhar et~al.}]{touvron2023llama}
Hugo Touvron, Thibaut Lavril, Gautier Izacard, Xavier Martinet, Marie-Anne Lachaux, Timoth{\'e}e Lacroix, Baptiste Rozi{\`e}re, Naman Goyal, Eric Hambro, Faisal Azhar, et~al. 2023.
\newblock Llama: Open and efficient foundation language models.
\newblock \emph{arXiv preprint arXiv:2302.13971}.

\bibitem[{van~den Oord et~al.(2018)van~den Oord, Li, and Vinyals}]{oord2018representation}
Aaron van~den Oord, Yazhe Li, and Oriol Vinyals. 2018.
\newblock Representation learning with contrastive predictive coding.
\newblock \emph{arXiv preprint arXiv:1807.03748}.

\bibitem[{Wang and Liu(2021)}]{wang2021understanding}
Feng Wang and Huaping Liu. 2021.
\newblock Understanding the behaviour of contrastive loss.
\newblock In \emph{Proceedings of the IEEE/CVF conference on computer vision and pattern recognition}, pages 2495--2504.

\end{thebibliography}
\bibliographystyle{acl_natbib}
\appendix
\section{Demonstration: Enhancing Fortran-to-C++ Translation Using Aligned RAG on LLaMa3.1 70b Model}
\label{appdx}
\subsection{Fortran Code to be Translated}
Below is the original Fortran code that needs to be translated into C++:

\begin{lstlisting}[language=Fortran,caption={Fortran Code}]
program DRB011_minusminus_orig_yes
    use omp_lib
    implicit none

    integer :: i, len, numNodes, numNodes2
    integer :: x(100)

    len = 100
    numNodes = len
    numNodes2 = 0

    do i = 1, len
        if (MOD(i,2) == 0) then
            x(i) = 5
        else
            x(i) = -5
        end if
    end do

    !$omp parallel do
    do i = numNodes, 1, -1
        if (x(i) <= 0) then
            numNodes2 = numNodes2 - 1
        end if
    end do
    !$omp end parallel do

    print*, "numNodes2 =", numNodes2
end program
\end{lstlisting}

\subsection{Ground Truth}
Below is the human-generated ground truth Fortran-to-C++ translation, which serves as the baseline:

\begin{lstlisting}[language=C++,caption={Ground Truth}]
#include <stdlib.h>
#include <stdio.h>
int main(int argc, char* argv[]) {
    int i;
    int len = 100;
    int numNodes = len;
    int numNodes2 = 0;
    int x[100];

    // initialize x[]
    for (i = 0; i < len; i++) {
        if (i % 2 == 0)
            x[i] = 5;
        else
            x[i] = -5;
    }

#pragma omp parallel for
    for (i = numNodes - 1; i > -1; --i) {
        if (x[i] <= 0) {
            numNodes2--;
        }
    }

    printf("numNodes2 = %d\n", numNodes2);
    return 0;
}
\end{lstlisting}

\subsection{Zero-shot Translation}
The zero-shot output from the LLM demonstrates basic translation abilities but with structural issues:

\begin{lstlisting}[language=C++,caption={Zero-shot Translation}]
#include <iostream>
#include <omp.h>

int main() {
    const int len = 100;
    int numNodes = len;
    int numNodes2 = 0;
    int x[len];

    for (int i = 1; i <= len; i++) {
        if (i % 2 == 0) {
            x[i - 1] = 5;
        } else {
            x[i - 1] = -5;
        }
    }

    #pragma omp parallel for reduction(-:numNodes2)
    for (int i = numNodes; i >= 1; i--) {
        if (x[i - 1] <= 0) {
            numNodes2--;
        }
    }

    std::cout << "numNodes2 = " << numNodes2 << std::endl;

    return 0;
}
\end{lstlisting}

\subsection{Unaligned RAG One-shot Translation}
The one-shot output improves the translation, but some structural issues still remain. Specifically, the initialization of variables and handling of array indices are slightly different:

\begin{lstlisting}[language=C++,caption={One-shot Translation}]
#include <stdio.h>

int main() {
    int i, len, numNodes, numNodes2;
    int x[100];

    len = 100;
    numNodes = len;
    numNodes2 = 0;

    for (i = 0; i < len; i++) {
        if ((i + 1) % 2 == 0) {
            x[i] = 5;
        } else {
            x[i] = -5;
        }
    }

#pragma omp parallel for reduction(-:numNodes2)
    for (i = numNodes - 1; i >= 0; i--) {
        if (x[i] <= 0) {
            numNodes2--;
        }
    }

    printf("numNodes2 = %d\n", numNodes2);

    return 0;
}
\end{lstlisting}

\subsection{Aligned RAG Translation}
By aligning the retrieval process to the context of the task, the translation becomes much more accurate, closely matching the ground truth:

\begin{lstlisting}[language=C++,caption={Aligned RAG-based Translation}]
#include <stdio.h>
#include <omp.h>

int main(int argc, char* argv[]) {
    int i;
    int len = 100;
    int numNodes = len;
    int numNodes2 = 0;
    int x[100];

    for (i = 0; i < len; i++) {
        if ((i % 2) == 0) {
            x[i] = 5;
        } else {
            x[i] = -5;
        }
    }

#pragma omp parallel for
    for (i = numNodes - 1; i >= 0; i--) {
        if (x[i] <= 0) {
            numNodes2--;
        }
    }

    printf("numNodes2 = %d\n", numNodes2);
    return 0;
}
\end{lstlisting}
\end{document}